\documentclass[11pt]{amsart}

\usepackage[utf8]{inputenc}
\usepackage[T1]{fontenc}
\usepackage[a4paper,margin=1in]{geometry}
\usepackage[english]{babel}

\usepackage{amsmath,amsfonts,amssymb,amsthm,mathtools}
\usepackage{bm}
\usepackage{mathrsfs}

\usepackage{graphicx}
\usepackage{float}
\usepackage{booktabs}
\usepackage{multirow}
\usepackage{algorithm}
\usepackage{algorithmic}

\usepackage{url}
\usepackage{csquotes}
\usepackage{xcolor}
\usepackage[colorlinks=true,linkcolor=blue,citecolor=blue,urlcolor=blue,unicode]{hyperref}

\newtheorem{theorem}{Theorem}[section]
\newtheorem{definition}{Definition}[section]
\newtheorem{proposition}{Proposition}[section]
\newtheorem{corollary}{Corollary}[section]

\newtheorem{lemma}{Lemma}

\newtheorem*{remark}{Remark}

\def\to{\rightarrow}

\def\bal{\begin{aligned}}
	\def\ea{\end{aligned}}

\def\bdf{\begin{definition}}
	\def\edf{\end{definition}}
\def\bth{\begin{theorem}}
	\def\eth{\end{theorem}}
\def\bprp{\begin{proposition}}
	\def\eprp{\end{proposition}}
\def\bclr{\begin{corollary}}
	\def\eclr{\end{corollary}}
\def\blm{\begin{lemma}}
	\def\elm{\end{lemma}}
\def\brk{\begin{remark}}
	\def\erk{\end{remark}}

\def\bmtx{\left\{\begin{array}}
	\def\emtx{\end{array}\right\}}
\def\eeqts{\end{array}\right.}

\newcommand{\SPD}{\operatorname{SPD}}
\newcommand{\Sym}{\operatorname{Sym}}
\newcommand{\Orth}{\operatorname{O}}
\newcommand{\Logm}{\operatorname{Log}}

\newcommand{\Id}{\operatorname{Id}}
\newcommand{\RicD}{\operatorname{Ric}^{\mathrm D}}
\newcommand{\RicDiv}{\operatorname{Ric}^{\mathrm{D,div}}}

\newcommand{\dist}{\operatorname{d}}
\newcommand{\cC}{\mathcal C}

\title[Holonomy-Based Curvature Discretization]{Geometric Structures on Graphs:\\
	A Holonomy-Based Discretization of Curvature}
\author[H. Li]{Hao Li\textsuperscript{1}}
\author[Y. Peng]{Yuhan Peng\textsuperscript{2}}
\author[J. Dong]{Junwen Dong\textsuperscript{3}}

\thanks{%
\textsuperscript{1}Department of Mathematics, National University of Singapore, Singapore. Email: \texttt{E1311647@u.nus.edu}.\protect\\
\textsuperscript{2}School of Physical and Mathematical Sciences, Nanyang Technological University, Singapore. Email: \texttt{YUHAN018@e.ntu.edu.sg}.\protect\\
\textsuperscript{3}Chern Institute of Mathematics, Nankai University, Tianjin, China. Email: \texttt{N2505367A@e.ntu.edu.sg}.%
}

\date{\today}
\subjclass[2020]{05C10, 53C44, 68T07}
\keywords{graph geometric structure, discrete connection, holonomy, Ricci-type curvature, graph neural networks}

\usepackage{xparse}
\makeatletter
\newcommand{\runningtitle}{\MakeUppercase{\shorttitle}}
\let\oldsection\section
\RenewDocumentCommand{\section}{s o m}{%
	\IfBooleanTF{#1}
	{\oldsection*{#3}}
	{%
		\IfNoValueTF{#2}
		{\oldsection{#3}\markboth{\MakeUppercase{\thesection.\ #3}}{\runningtitle}}
		{\oldsection[#2]{#3}\markboth{\MakeUppercase{\thesection.\ #2}}{\runningtitle}}%
	}%
}
\makeatother

\begin{document}
	
\begin{abstract}
	We propose a holonomy-based framework for discretizing curvature on graphs equipped with local symmetric positive-definite metrics.  Each vertex carries a fibre metric \(g_i\), and each directed edge carries a reversible metric-compatible transport \(F_{ij}\).  The ordered product around an oriented triangular loop \(\mathcal C\) gives a holonomy \(H_{\mathcal C}\), whose normalized logarithm \(\Omega_{\mathcal C}=-s_{\mathcal C}^{-1}\operatorname{Log}(H_{\mathcal C})\) is used as a finite-loop curvature observation.  Thus the construction discretizes the geometric principle that infinitesimal holonomy is controlled by curvature, rather than treating holonomy as a heuristic feature.  Since \(\Omega_{\mathcal C}\) lies in the \(g_i\)-orthogonal Lie algebra, it is not itself a velocity of an SPD metric.  We therefore introduce two aggregation mechanisms: a commutator with a symmetric response matrix, producing symmetric Ricci-type metric responses, and an incidence-aware covariant divergence of curvature-induced edge fluxes, reflecting the relation between trace and covariant divergence.  The resulting responses are locally orthogonal-gauge equivariant and can drive exponential updates that preserve positive definiteness.  We also give a reversible metric-compatible parametrization of edge transports, allowing orthogonal edge factors, loop scales, weights, and response matrices to be learned while respecting the graph geometry.  Known-geometry calibrations on the unit sphere test the holonomy--curvature relation, curvature preservation under nontrivial local metric representations, and the empirical recovery of edge transports from local observations.
\end{abstract}
	
	\maketitle
	
	\section{Introduction}
	
Curvature has provided an important geometric language for network analysis and graph learning.  Ollivier--Ricci curvature compares neighbourhood probability measures by optimal transport \cite{Ollivier2009}, while Forman--Ricci curvature arises from a Bochner--Weitzenbock type principle on weighted cell complexes \cite{Forman2003}.  Under suitable sampling and scaling assumptions, rescaled Ollivier--Ricci curvature on random geometric graphs can also converge to the Ricci curvature of an underlying Riemannian manifold \cite{vanderHoorn2023}.  These notions and their variants have been used for community detection and graph Ricci flow \cite{Ni2019}, over-squashing and curvature-based rewiring \cite{Topping2022}, and curvature-weighted message passing in graph learning models \cite{Li2022}.
	
	In many graph-learning uses, however, curvature enters the model after being compressed into a scalar descriptor on edges or vertices, where it modulates neighborhood weights, rewiring rules, or information-propagation strength.  This effectively exploits local structural information revealed by curvature, but it does not directly discretize the tensor algebra or geometric construction mechanism of curvature itself.  This leads to the question addressed in this paper: instead of starting from a prescribed scalar curvature, can graph interactions themselves be organized as discrete geometric data with local metrics and edgewise parallel transports, so that curvature is obtained from their failure to compose trivially around local closed loops?
	
	We study this question through the relation between local metric structures, edgewise parallel transport, and holonomy on graphs.  Each vertex \(i\) is assigned a fibre \(E_i\cong\mathbb R^n\) with a positive-definite metric \(g_i\in\SPD_n\), and a linear map on a directed edge \(j\to i\),
	\[
	F_{ij}:E_j\to E_i,
	\]
	is interpreted as a discrete parallel transport from the local fibre at \(j\) to the local fibre at \(i\).  Metric compatibility requires
	\[
	F_{ij}^{\mathsf T}g_iF_{ij}=g_j,
	\qquad F_{ji}=F_{ij}^{-1}.
	\]
	This structure can be viewed as a finite gluing of locally flat patches: vertices aggregate local coordinate neighborhoods, edges encode finite identifications between neighboring patches, and curvature is detected by the nontrivial holonomy produced when these identifications are composed around closed loops.  More abstractly, it equips a graph with geometrically constrained gauge connection data.
	
	The construction is motivated by a classical fact in differential geometry: for a small loop shrinking to a point, the holonomy of parallel transport is controlled at second order by the curvature of the connection.  Thus, if \(\mathcal C=(i,j,k,i)\) is a selected oriented triangular loop, its holonomy
	\[
	H_{\mathcal C}=F_{ik}F_{kj}F_{ji}
	\]
	is an automorphism of the base fibre.  Whenever the principal logarithm is defined, the normalized quantity
	\[
	\Omega_{\mathcal C}
	=-\frac{1}{s_{\mathcal C}}Log(H_{\mathcal C})
	\]
	gives a curvature observation at a finite loop scale.  Here \(s_{\mathcal C}\) is an effective loop scale compatible with the graph geometry: when embedding or area information is available, it may be induced by a local area; on an abstract graph, it is an explicitly specified normalization structure.  The purpose is therefore not to claim a unique curvature for a purely combinatorial graph, but to discretize the continuous holonomy--curvature mechanism into computable and composable graph-level objects.
	
	The curvature observation \(\Omega_{\mathcal C}\) is \(g_i\)-skew-adjoint, whereas a velocity of a positive-definite metric must be symmetric.  To connect these two types of objects, we introduce a symmetric response matrix \(B_{\mathcal C,i}\) and define a Ricci-type response through the commutator
	\[
	\operatorname{Ric}^{D}_{g}(i)
	=
	\sum_{\mathcal C\in\mathcal C_i}
	\alpha_{\mathcal C}
	[B_{\mathcal C,i},g_i\Omega_{\mathcal C}].
	\]
	This commutator does not redefine curvature.  Rather, it describes a noncommutative coupling between the local curvature channel and the metric-response channel, thereby producing a direction that can drive an update of the positive-definite metric.
	
	In addition to directly aggregating loop curvature at a vertex, we also preserve the incidence structure between oriented faces and edges.  More precisely, loop curvature first induces an antisymmetric edge flux, which is then aggregated at vertices by a covariant graph divergence.  This mechanism corresponds to the continuous relation
	\[
	\operatorname{div}V=\operatorname{tr}_g(\nabla V),
	\]
	where local geometric information is collected through a covariant derivative followed by a trace.  It is related to, but distinct from, the contraction of the curvature tensor that produces the Ricci tensor: the former gives a curvature aggregation that retains face--edge incidence information, while the latter requires additional directional and quadrature structures.  We keep these two mechanisms separate, so that direct aggregation and divergence aggregation can support different graph-geometric interactions.
	
	The resulting structure can be naturally embedded into graph-learning layers.  At each layer, orthogonal factors \(O_{ij}\) generate strictly metric-compatible edge transports by
	\[
	F_{ij}=g_i^{-1/2}O_{ij}g_j^{1/2}.
	\]
	Loop holonomy then produces curvature observations; Ricci-type responses evolve the family \(\{g_i\}\) through an exponential update that preserves positive definiteness; and semantic features may be propagated by message-passing operators conditioned on curvature invariants or other geometric statistics.  In this way, metric evolution and semantic propagation are separated into two coupled but distinct stages: the former constructs and updates local geometry, while the latter uses that geometry to modulate task-dependent information exchange.
	
	The main contributions are as follows.
	\begin{itemize}
		\item We propose a graph geometric structure in which vertices carry local SPD metrics and edges carry reversible metric-compatible discrete parallel transports, together with an orthogonal-factor parametrization and local orthogonal gauge symmetry.
		
		\item Starting from the continuous small-loop holonomy expansion, we construct normalized logarithmic holonomy as a graph curvature observation, and give both direct loop aggregation and a covariant divergence aggregation based on curvature-induced edge fluxes.
		
		\item We use a commutator construction to resolve the mismatch between the skew-adjointness of curvature observations and the symmetry required of metric velocities, obtaining symmetric gauge-equivariant Ricci-type responses and an SPD-preserving discrete update.
		
		\item We provide a structure-preserving learning realization, and use known-geometry calibrations on the unit sphere to test the holonomy--curvature relation, curvature preservation under metric-compatible representations, and the empirical trend of recovering edge transports from local observations.
	\end{itemize}
	
	The rest of the paper is organized as follows.  Section~\ref{sec:structure} defines local metric fibres, edge transports, and oriented loops on graphs.  Section~\ref{sec:curvature} constructs holonomy-based curvature observations and gives both direct and divergence-type aggregations.  Section~\ref{sec:ricci} defines Ricci-type responses and SPD-preserving metric evolution.  Section~\ref{sec:learning} discusses a structure-preserving learning realization.  Section~\ref{sec:checks} gives basic structural checks and special cases.  Section~\ref{sec:synthetic} reports known-geometry calibrations and synthetic consistency experiments.

\section{Graph geometric structures}\label{sec:structure}

\subsection{Local metric fibres and edge transport}

Let $G=(V,E)$ be a finite undirected graph.  We write $j\sim i$ when
$\{i,j\}\in E$, and replace each undirected edge by both directed edges.  At
every node $i$, let $E_i\cong\mathbb R^n$ be a fibre equipped with the inner
product
\[
\langle u,v\rangle_{g_i}=u^{\mathsf T}g_iv,
\qquad g_i\in\SPD_n.
\]
The matrices $g_i$ may be interpreted as node features, as local metric data,
or as learned latent metrics.  The use of an SPD feature space is consistent
with the affine-invariant geometry of positive-definite matrices
\cite{Bhatia2007}.

To motivate a geometric realization, one may regard the graph as
encoding a \emph{piecewise-flat} underlying space: a node $i$ represents a
locally flat patch $U_i$ equipped with a chosen parallel frame, and in that
frame the metric is represented by a constant matrix $g_i\in\SPD_n$.  An edge
records the adjacency of two such patches and carries a finite identification
of their frames.  The corresponding curvature is not asserted to occur inside
an individual flat patch; it is detected by the failure of these
identifications to compose trivially around a loop encircling interfaces or
hinges.

Under this interpretation, $F_{ij}$ is a discrete parallel transport from the
fibre over $j$ to the fibre over $i$.  It is not, in general, the coordinate
transition Jacobian $\varphi_i\varphi_j^{-1}$: the latter satisfies an atlas
cocycle relation on triple overlaps, whereas the former is the connection data
whose loop product records curvature.  A metric-compatible family of such
finite transports, supplemented when appropriate by a discrete torsion-free
condition, is therefore Levi--Civita-inspired in this piecewise-flat
realization.  More generally, the properties of a connection in our framework
are encoded through the parallel transports it induces along graph edges.

\begin{definition}[Metric graph connection]\label{def:connection}
	A \underline{metric graph connection} on $(G,\{g_i\})$ is a family of invertible
	edge maps
	\[
	F_{ij}:E_j\longrightarrow E_i, \qquad j\sim i,
	\]
	such that
	\begin{equation}\label{eq:metric-compatibility}
		F_{ij}^{\mathsf T}g_iF_{ij}=g_j,
		\qquad F_{ji}=F_{ij}^{-1}.
	\end{equation}
\end{definition}

The first condition is the discrete counterpart of metric compatibility: the
transport preserves fibre inner products.  It is a modelling constraint, not a
consequence of an arbitrary choice of edge maps.  

\begin{proposition}[Orthogonal-equivariant]\label{prop:orthogonalequivariant}
	For each directed edge $j\to i$, the metric compatibility condition in
	\eqref{eq:metric-compatibility} holds if and only if
	\begin{equation}\label{eq:transport-parameterization}
		F_{ij}=g_i^{-1/2}O_{ij}g_j^{1/2}
		\quad\text{for some}\quad O_{ij}\in\Orth(n).
	\end{equation}
	Moreover, edge reversal is enforced by choosing $O_{ji}=O_{ij}^{\mathsf T}$.
\end{proposition}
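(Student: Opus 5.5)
The plan is to reduce the metric compatibility condition to orthogonality of a conjugated matrix, using the symmetric positive-definite square roots $g_i^{1/2}$ and $g_j^{1/2}$. These exist, are unique, and are symmetric and invertible because $g_i,g_j\in\SPD_n$. Given any invertible $F_{ij}$, I set
\[
O_{ij}:=g_i^{1/2}F_{ij}g_j^{-1/2},
\]
so that $F_{ij}=g_i^{-1/2}O_{ij}g_j^{1/2}$ holds tautologically. The whole equivalence then comes down to showing that $O_{ij}\in\Orth(n)$ if and only if $F_{ij}^{\mathsf T}g_iF_{ij}=g_j$.

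For the forward direction, assume metric compatibility and compute
\[
O_{ij}^{\mathsf T}O_{ij}=g_j^{-1/2}F_{ij}^{\mathsf T}g_i^{1/2}g_i^{1/2}F_{ij}g_j^{-1/2}=g_j^{-1/2}\,(F_{ij}^{\mathsf T}g_iF_{ij})\,g_j^{-1/2}.
\]
This step uses the symmetry of the square roots. Substituting $F_{ij}^{\mathsf T}g_iF_{ij}=g_j$ gives $O_{ij}^{\mathsf T}O_{ij}=\Id$. For the converse, start from $F_{ij}=g_i^{-1/2}O_{ij}g_j^{1/2}$ with $O_{ij}$ orthogonal. Then $F_{ij}^{\mathsf T}g_iF_{ij}=g_j^{1/2}O_{ij}^{\mathsf T}g_i^{-1/2}g_ig_i^{-1/2}O_{ij}g_j^{1/2}=g_j^{1/2}g_j^{1/2}=g_j$. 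Invertibility of $F_{ij}$ is automatic here, since it is a product of invertible matrices.

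For the reversal statement, I will check that choosing $O_{ji}=O_{ij}^{\mathsf T}$ yields $F_{ji}=F_{ij}^{-1}$. Directly,
\[
F_{ji}F_{ij}=g_j^{-1/2}O_{ij}^{\mathsf T}g_i^{1/2}\,g_i^{-1/2}O_{ij}g_j^{1/2}=\Id.
\]
Conversely, the representation $O_{ji}=g_j^{1/2}F_{ji}g_i^{-1/2}$ is uniquely determined by $F_{ji}$. So $F_{ji}=F_{ij}^{-1}$ forces $O_{ji}=g_j^{1/2}g_j^{-1/2}O_{ij}^{-1}g_i^{1/2}g_i^{-1/2}=O_{ij}^{\mathsf T}$, which means the choice is also necessary. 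I should also note that the two conditions in \eqref{eq:metric-compatibility} are consistent for the reversed edge: if $F_{ij}$ is metric compatible, then $F_{ij}^{-1}$ satisfies $(F_{ij}^{-1})^{\mathsf T}g_jF_{ij}^{-1}=g_i$, by inverting the identity.

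There is no real obstacle. The only point requiring care is to use the \emph{symmetric} positive square root rather than an arbitrary factor such as a Cholesky factor. Symmetry is what makes $(g^{1/2})^{\mathsf T}=g^{1/2}$, and that identity is used in both directions of the computation. With a different choice of factorization, the statement would hold only up to modifying the parametrization.
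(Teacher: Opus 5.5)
Your proposal is correct and follows the paper's proof: you define $O_{ij}=g_i^{1/2}F_{ij}g_j^{-1/2}$, show that $O_{ij}^{\mathsf T}O_{ij}=I$ is equivalent to $F_{ij}^{\mathsf T}g_iF_{ij}=g_j$ by direct substitution, and verify the reversal from $O_{ji}=O_{ij}^{\mathsf T}$. Your extra check that $F_{ji}=F_{ij}^{-1}$ forces $O_{ji}=O_{ij}^{\mathsf T}$ is a small strengthening the paper does not spell out.
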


\begin{proof}
	Set $O_{ij}=g_i^{1/2}F_{ij}g_j^{-1/2}$.  Equation
	\eqref{eq:metric-compatibility} gives
	$O_{ij}^{\mathsf T}O_{ij}=I$, proving $O_{ij}\in\Orth(n)$ and
	\eqref{eq:transport-parameterization}.  Conversely, direct substitution of
	\eqref{eq:transport-parameterization} gives
	$F_{ij}^{\mathsf T}g_iF_{ij}=g_j$.  The inverse relation follows from
	$O_{ji}=O_{ij}^{\mathsf T}$. 
\end{proof}

In the piecewise-flat realization above, $F_{ij}$ may be viewed as
a finite parallel transport between adjacent locally flat patches.  On an
abstract graph this interpretation is additional geometric structure rather
than automatic data; Definition~\ref{def:connection} nevertheless remains
meaningful as a metric-compatible discrete gauge connection.

For completeness, we retain an optional discrete analogue of torsion
freeness.  In Cartan's moving-frame notation, the torsion of a connection $A$
is
\[
\Theta(A):=de+\omega_A\wedge e,
\]
where $e$ is the frame field and $\omega_A$ is the connection one-form.  Thus
the torsion-free condition is $\Theta(A)=0$.  Suppose now that each directed
edge $i\to j$ is assigned a displacement vector $v_{ij}\in E_i$.  For a
triangular loop based at $i$, the three terms below all belong to $E_i$.

\begin{definition}
	If the graph assigns each directed edge $i\to j$ a displacement vector
	$v_{ij}\in E_i$, the first structure equation motivates the
	\underline{weak torsion-free condition} for the parallel transports
	$\{F_{ij}\}$ that, for every triangular loop
	\[
	(i,j,k):=\{i\to j\to k\to i\mid i,j,k\in V\},
	\]
	\begin{equation}\label{fij:torsion free}
		v_{ij}+F_{ij}v_{jk}+F_{ik}v_{ki}=0.
	\end{equation}
\end{definition}

For abstract graphs without a spatial embedding, the displacement vectors and
condition \eqref{fij:torsion free} are additional modelling data and should not
be imposed by default.  We therefore retain metric compatibility as the
standing requirement.  Dropping the optional weak torsion-free condition gives
a general metric-compatible connection, while learning the edge factors
$O_{ij}$ may model geometric effects not represented by the abstract graph.
In the piecewise-flat interpretation, imposing both compatibility and the
weak closure condition is the sense in which the construction most closely
resembles a discrete Levi--Civita connection.

\subsection{Loops, scales, and gauge changes}

We use selected triangular faces as minimal graph loops.  Let
\[
\mathscr T:=
\bigl\{\{i,j,k\}\subseteq V:
\{i,j\},\{j,k\},\{k,i\}\in E\bigr\}
\]
be the collection of underlying triangular faces of $G$.  For every
$\tau=\{i,j,k\}\in\mathscr T$, choose exactly one of its two cyclic
orientations.  The resulting oriented face collection is denoted by $\mathcal F\subseteq V^3$. Thus, for each $\tau\in\mathscr T$, exactly one of $(i,j,k)$ and $(i,k,j)$, up to cyclic permutation, belongs to $\mathcal F$.  The ordered triple $\sigma=(i,j,k)\in\mathcal F$ represents the
oriented closed walk $\cC=(i,j,k,i)$.  If $i\in\sigma$, write
$\sigma_i=(i,j,k)$ for the cyclic re-rooting of $\sigma$ whose first vertex is
$i$, and write
\[
\cC_{\sigma,i}:=(i,j,k,i)
\]
for the corresponding rooted closed loop.  Thus $\sigma$ emphasizes the
oriented face, while $\cC_{\sigma,i}$ emphasizes the associated loop and its
base fibre.

For $\sigma_i=(i,j,k)$, define the holonomy of $\sigma$ based at $i$ by
\begin{equation}\label{eq:holonomy}
	H_{\sigma_i}=H_{\cC_{\sigma,i}}:E_i\longrightarrow E_i,
	\qquad
	H_{\sigma_i}:=F_{ik}F_{kj}F_{ji}.
\end{equation}
Metric compatibility implies
$H_{\sigma_i}^{\mathsf T}g_iH_{\sigma_i}=g_i$.  Thus loop holonomy lies in
the $g_i$-orthogonal group.  We use the principal matrix logarithm only under
the standard admissibility condition that $H_{\sigma_i}$ has no eigenvalue on
the closed negative real axis.  This condition excludes the branch ambiguity
at $-1$ and guarantees a real logarithm compatible with the chosen local
branch.

The relation between holonomy and baseline geometry is stressed in the following lemma.
\begin{lemma}
	\label{lem:infinitesimal-holonomy}
	Let $\pi:(E,\nabla)\to M$ be a smooth vector bundle with connection, let
	$p\in M$, and let $X,Y\in T_pM$. Choose a smooth local surface
	$\Phi$ with
	\[
	\Phi(0,0)=p,\qquad
	\partial_s\Phi(0,0)=X,\qquad
	\partial_t\Phi(0,0)=Y.
	\]
	For $\varepsilon>0$, let $\gamma_\varepsilon$ be the positively oriented
	boundary of $\Phi([0,\varepsilon]^2)$, based at $p$, and denote by
	$P_{\gamma_\varepsilon}:E_p\to E_p$ the parallel transport around
	$\gamma_\varepsilon$.  Fix the curvature convention
	\[
	R^\nabla(X,Y):=\nabla_X\nabla_Y-\nabla_Y\nabla_X-\nabla_{[X,Y]}.
	\]
	With the orientation and parallel-transport convention chosen so that the
	displayed sign holds, one has
	\[
	P_{\gamma_\varepsilon}
	=\Id-\varepsilon^2R^\nabla_p(X,Y)+O(\varepsilon^3).
	\]
	Consequently, for sufficiently small $\varepsilon$,
	\[
	\Logm(P_{\gamma_\varepsilon})
	=-\varepsilon^2R^\nabla_p(X,Y)+O(\varepsilon^3).
	\]
\end{lemma}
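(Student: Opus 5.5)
We will prove the expansion by pulling everything back to the parameter square, where parallel transport becomes the solution of a linear ODE, and then read off the second-order term. Fix a local frame of $E$ near $p$, so that $\nabla=d+A$ with $A$ a matrix-valued $1$-form, and pull back along $\Phi$. Write $\Phi^*A=a\,ds+b\,dt$, with $a,b$ smooth matrix functions of $(s,t)$ near $(0,0)$. Since $\Phi_*\partial_s$ and $\Phi_*\partial_t$ commute, the pulled-back curvature on the square is
\[
\Phi^*R^\nabla(\partial_s,\partial_t)=\partial_s b-\partial_t a+[a,b].
\]
At $(0,0)$ this equals $R^\nabla_p(X,Y)$, because $\partial_s\Phi(0,0)=X$ and $\partial_t\Phi(0,0)=Y$ and curvature is tensorial. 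The problem therefore reduces to a computation on $[0,\varepsilon]^2\subset\mathbb R^2$.

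Along a path $c$, parallel transport solves $\dot u=-A(\dot c)\,u$. Over a straight segment of length $\varepsilon$ along which the coefficient is $a$, the Magnus/Dyson expansion gives
\[
P=\Id-\int a+\tfrac12\Bigl(\int a\Bigr)^2+\tfrac12\iint_{\tau_1<\tau_2}[a(\tau_2),a(\tau_1)]+O(\varepsilon^3).
\]
The commutator term is already $O(\varepsilon^3)$, since $a(\tau_2)-a(\tau_1)=O(\varepsilon)$. We compose the four sides in order:
\begin{enumerate}
\item $(0,0)\to(\varepsilon,0)$ with coefficient $a(\cdot,0)$;
\item $(\varepsilon,0)\to(\varepsilon,\varepsilon)$ with $b(\varepsilon,\cdot)$;
\item $(\varepsilon,\varepsilon)\to(0,\varepsilon)$ with $-a(\cdot,\varepsilon)$;
\item $(0,\varepsilon)\to(0,0)$ with $-b(0,\cdot)$.
\end{enumerate}
Later segments multiply on the left. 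Each segment transport is $\Id-\varepsilon c_k+\tfrac12\varepsilon^2c_k^2+\varepsilon^2(\text{first-order Taylor terms})+O(\varepsilon^3)$, with the coefficients Taylor-expanded at $(0,0)$.

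Next we collect the orders. At order $\varepsilon$ the linear terms cancel in pairs: $a_0-a_0+b_0-b_0=0$. At order $\varepsilon^2$ there are two contributions.
\begin{itemize}
\item The derivative terms leave $-(\partial_s b-\partial_t a)_0$. Equivalently, by Stokes' theorem, $\oint\Phi^*A=\iint d(\Phi^*A)$, which is $\varepsilon^2(\partial_s b-\partial_t a)_0+O(\varepsilon^3)$.
\item The cross terms from the ordered product of the four first-order pieces leave $-[a_0,b_0]$ (or $+[a_0,b_0]$ under the reverse transport convention).
\end{itemize}
Together these give $P_{\gamma_\varepsilon}=\Id-\varepsilon^2(\partial_s b-\partial_t a+[a,b])_0+O(\varepsilon^3)=\Id-\varepsilon^2R^\nabla_p(X,Y)+O(\varepsilon^3)$. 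The statement allows orientation and transport conventions to be fixed so that this sign holds, so we only need to track the signs consistently and note that reversing either convention flips the overall sign.

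The main obstacle is the ordered-product bookkeeping for the commutator. The $\tfrac12 c_k^2$ terms and the pairwise products $c_kc_l$ with $k$ after $l$ must be combined carefully, since the order-one coefficients are $a_0,b_0,-a_0,-b_0$. We will do this first by the cleaner route: write $P=\Id+X_1+X_2+\dots$ with $X_1=-\oint\Phi^*A$ and $X_2=\int_{t_1>t_2}A(t_1)A(t_2)$. Then antisymmetrize $X_2$ as $\tfrac12\bigl(\oint A\bigr)^2$ plus half an iterated commutator integral. The first part is $O(\varepsilon^4)$ because $\oint A=O(\varepsilon^2)$. The second part is evaluated at frozen coefficients as the signed area $\varepsilon^2$ times $[a_0,b_0]$, using the Lévy-area identity for the square. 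All remainders are uniform because $a,b$ are $C^1$ on a compact neighbourhood.

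For the logarithm, write $P_{\gamma_\varepsilon}=\Id+Z$ with $\|Z\|=O(\varepsilon^2)$. For small $\varepsilon$ the spectrum lies near $1$, so $P_{\gamma_\varepsilon}$ is admissible and the principal logarithm is given by the convergent series $\Logm(\Id+Z)=Z-\tfrac12Z^2+\cdots=Z+O(\varepsilon^4)$. This yields $\Logm(P_{\gamma_\varepsilon})=-\varepsilon^2R^\nabla_p(X,Y)+O(\varepsilon^3)$.
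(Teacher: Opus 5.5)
Your argument is correct and more complete than the paper's. The paper's proof only cites the classical local holonomy formula (Kobayashi--Nomizu, Chapter~II) and then passes to the logarithm via $\Logm(I+A)=A+O(\|A\|^2)$, which is the same final step you take. You instead derive the holonomy formula in a local frame:
\begin{itemize}
\item a Dyson expansion along the boundary of the parameter square;
\item Stokes' theorem for the abelian part $\partial_s b-\partial_t a$;
\item ordered-product (L\'evy-area) bookkeeping for $[a,b]$.
\end{itemize}
The frozen cross terms $\tfrac12\sum_{k>l}[c_k,c_l]$ with $(c_1,\dots,c_4)=(a_0,b_0,-a_0,-b_0)$ do give $-[a_0,b_0]$.

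The citation buys brevity and a coordinate-free setting. Your computation buys explicit, uniform control of the $O(\varepsilon^3)$ remainder. More usefully here, it fixes the sign. With $\dot u=-A(\dot c)u$, later segments composed on the left, and the counterclockwise boundary of $[0,\varepsilon]^2$, you get exactly $\Id-\varepsilon^2R^\nabla_p(X,Y)$. So the statement's hedge about choosing conventions is already met by the standard ones. This is consistent with the fact that on the unit sphere a positively oriented small loop rotates vectors positively by the enclosed area.

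Two small points to tighten. First, derive $(\partial_s b-\partial_t a+[a,b])(0,0)=R^\nabla_p(X,Y)$ from the pullback connection on $\Phi^*E$, whose curvature is $\Phi^*R^\nabla$ and for which $[\partial_s,\partial_t]=0$. Do not rely on $\Phi_*\partial_s$ and $\Phi_*\partial_t$ commuting, since these are not in general vector fields on $M$. Second, the $O(\varepsilon^3)$ Taylor and Stokes remainders need bounded \emph{second} derivatives of $a,b$, not just $C^1$; smoothness provides these.
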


\begin{proof}
	This is the classical local holonomy formula; see, for example,
	\cite[Chapter~II]{KobayashiNomizu1963}. The logarithmic expansion follows
	from $\Logm(I+A)=A+O(\|A\|^2)$ as $A\to0$.
\end{proof}

Lemma~\ref{lem:infinitesimal-holonomy} is the continuous mechanism
underlying our construction: curvature is recovered from the normalized
logarithmic holonomy of a shrinking loop. Selected graph loops are finite
proxies for such local loops. Accordingly, for a selected loop
$\mathcal C$ with holonomy $H_{\mathcal C}$, we represent the curvature tensor acting on the in--and--out direction pair of loop $\mathcal{C}$ by
\[
\Omega_{\mathcal C}
:=-\frac{1}{s_{\mathcal C}}\Logm(H_{\mathcal C}),
\]
where $s_{\mathcal C}>0$ is an effective curvature scale determined by
the chosen graph geometry. When a local embedding or a controlled cover
is available, $s_{\mathcal C}$ may be chosen from the corresponding
positive (unsigned) area scale; otherwise it is specified through the loop
normalization.

\begin{definition}[Admissible loop curvature]\label{def:loop-curvature}
	For an admissible oriented face $\sigma=(i,j,k)\in\mathcal F$, its
	re-rooting $\sigma_i=(i,j,k)$, and the associated loop
	$\cC_{\sigma,i}=(i,j,k,i)$, set
	\[
	\Xi_{\sigma,i}:=\Logm(H_{\sigma,i}),
	\qquad
	\Xi_{\cC_{\sigma,i}}:=\Logm(H_{\cC_{\sigma,i}})
	=\Xi_{\sigma,i}.
	\]
	Let $s_\sigma=s_{\cC_{\sigma,i}}>0$ be an effective loop scale.  The
	normalized loop-curvature observation based at $i$ is
	\[
	\Omega_{\sigma,i}:=-\frac{1}{s_\sigma}\Xi_{\sigma,i},
	\qquad
	\Omega_{\cC_{\sigma,i}}
	:=-\frac{1}{s_{\cC_{\sigma,i}}}\Xi_{\cC_{\sigma,i}}
	=\Omega_{\sigma,i}.
	\]
\end{definition}

Because $H_{\cC}$ preserves $g_i$, its logarithm satisfies
\begin{equation}\label{eq:gskew}
	\Xi_{\cC}^{\mathsf T}g_i+g_i\Xi_{\cC}=0,
	\qquad
	\Omega_{\cC}^{\mathsf T}g_i+g_i\Omega_{\cC}=0.
\end{equation}
Thus $\Xi_\mathcal{C}$ and $\Omega_\mathcal{C}$ are both
$g_i$-skew-adjoint; equivalently, $g_i\Xi_\mathcal{C}$ and
$g_i\Omega_\mathcal{C}$ are skew-symmetric.

On a graph, $s_{\cC}$ is an explicit modelling choice.  If displacement
vectors or an embedding provide an area $A(\cC)$, one possible effective scale
is
\begin{equation}\label{eq:area-normalization}
	s_{\cC}=(2A(\cC)+\varepsilon_A)\exp(\gamma r_{\cC}^{2}),
	\qquad \gamma,\varepsilon_A>0,
\end{equation}
where $r_{\cC}$ measures loop size.  Then the factor $s_{\cC}^{-1}$ gives a
decaying area-normalized weight.  If no geometric embedding is given, the
normalization should be understood as an effective graph-scale choice rather
than as a literal area reciprocal.

The construction has a natural local orthogonal gauge symmetry: for
$Q_i\in\Orth(n)$ at every node, change the fibre coordinates by
\begin{equation}\label{eq:gauge-action}
	g_i' = Q_ig_iQ_i^{\mathsf T}, \qquad
	F_{ij}'=Q_iF_{ij}Q_j^{\mathsf T}.
\end{equation}
Then $H_{\cC}'=Q_iH_{\cC}Q_i^{\mathsf T}$ and, on the selected logarithm
branch, $\Omega_{\cC}'=Q_i\Omega_{\cC}Q_i^{\mathsf T}$.  This equivariance
will be inherited by the metric responses below.  We restrict to orthogonal
gauge changes because the commutator construction is expressed using matrix
products after lowering an index; general linear coordinate covariance would
require a more carefully typed tensor formulation.

\section{Holonomy-based curvature observations}\label{sec:curvature}

\subsection{From loop holonomy to a curvature signal}

For each vertex $i$, let
\[
\mathcal F_i:=\{\sigma\in\mathcal F: i\in\sigma\},
\qquad
\cC_i:=\{\cC_{\sigma,i}:\sigma\in\mathcal F_i\}.
\]
The direct holonomy aggregation is
\begin{equation}\label{eq:direct-aggregate}
	K_i:=\sum_{\sigma\in\mathcal F_i}
	\alpha_{\sigma,i}\Omega_{\sigma,i},
	\qquad \alpha_{\sigma,i}=:\alpha_{\mathcal C_{\sigma,i}}\geq0.
\end{equation}
Each selected underlying triangular face occurs exactly once in
$\mathcal F$.  Its orientation is encoded in the oriented loop defining
$\Omega_{\sigma,i}$, while $\alpha_{\sigma,i}$ is an unsigned scale or
confidence weight.  Since each summand is $g_i$-skew-adjoint, $K_i$ is also
$g_i$-skew-adjoint. In a smooth setting, the curvature tensor
is a two-form with values in endomorphisms, and the Ricci tensor arises only
after a specified contraction.  Equation~\eqref{eq:direct-aggregate} does not
recover this canonical contraction on an arbitrary graph: it is an aggregated
loop-curvature signal.  The distinction is central to the interpretation of
our method.

There are two ways to give $\alpha_{\cC}$ a geometric meaning.  With an
embedded graph and displacement vectors, a loop area and a local frame can
motivate the scale in \eqref{eq:area-normalization}.  Recall that under a fixed global trivialization, the node metrics admit a common matrix representation, and one may use the affine-invariant distance
\begin{equation}\label{eq:spd-distance}
	\dist_{\mathrm{SPD}}(g_i,g_j)
	=\bigl\|\log(g_i^{-1/2}g_jg_i^{-1/2})\bigr\|_F
\end{equation}
to form an effective loop size, for example
\[
r_{ijk}^{2}=\dist_{\mathrm{SPD}}(g_i,g_j)^2+
\dist_{\mathrm{SPD}}(g_j,g_k)^2+
\dist_{\mathrm{SPD}}(g_k,g_i)^2.
\]
This option is useful when the node matrices share a reference basis.  Under
independent local gauges, however, the direct comparison in
\eqref{eq:spd-distance} is not an invariant quantity.  It must therefore not
be presented as an intrinsic gauge-invariant edge scalar without an additional
registration assumption.

\begin{proposition}[Frame-based Ricci contraction calibration]
	\label{prop:ricci-contraction-calibration}
	Suppose that the graph data near a node $i$ are obtained from a local
	sampling of a Riemannian manifold $(M,g)$, and identify $E_i$ with
	$T_{p_i}M$.  Let $\mathcal N_i^{\mathrm{tr}}\subseteq\mathcal N(i)$ denote the set of
	neighbours selected for the discrete trace approximation, and $\{(v_{ik},\rho_{ik})\}_{k\in\mathcal N_i^{\mathrm{tr}}}$
	be a weighted family in $T_{p_i}M$ satisfying the resolution of identity
	\begin{equation}\label{eq:frame-resolution}
		\sum_{k\in\mathcal N_i^{\mathrm{tr}}}
		\rho_{ik}\,v_{ik}\otimes v_{ik}^{\flat}
		=\Id_{T_{p_i}M},
		\qquad
		v_{ik}^{\flat}(Z):=g_i(v_{ik},Z).
	\end{equation}
	For each sampled direction $v_{ij}$ and each
	$k\in\mathcal N_i^{\mathrm{tr}}$, let $\mathcal C_{ikj}$ be a selected
	oriented local loop that admits a smooth realization as the boundary of a
	shrinking surface to which Lemma~\ref{lem:infinitesimal-holonomy} applies.
	Assume that its scale is calibrated by
	$s_{\mathcal C_{ikj}}=\varepsilon_{ikj}^{2}
	+O(\varepsilon_{ikj}^{3})$.  Then the lemma and the normalization in
	Definition~\ref{def:loop-curvature} give
	\begin{equation}\label{eq:loop-curvature-calibration}
		\Omega_{\mathcal C_{ikj}}
		=R_{p_i}(v_{ik},v_{ij})+E_{ikj},
		\qquad
		E_{ikj}=O(\varepsilon_{ikj}).
	\end{equation}
	where $R$ is the curvature of the underlying connection and $E_{ikj}$ is
	the loop-curvature approximation error.  Define
	\begin{equation}\label{eq:discrete-ricci-contraction}
		\widehat{\operatorname{Ric}}_i(v_{ij},Y)
		:=
		\sum_{k\in\mathcal N_i^{\mathrm{tr}}}
		\rho_{ik}\,
		g_i\bigl(\Omega_{\mathcal C_{ikj}}Y,v_{ik}\bigr).
	\end{equation}
	Then
	\begin{equation}\label{eq:ricci-contraction-error}
		\widehat{\operatorname{Ric}}_i(v_{ij},Y)
		=
		\operatorname{Ric}_{p_i}(v_{ij},Y)
		+
		\sum_{k\in\mathcal N_i^{\mathrm{tr}}}
		\rho_{ik}\,
		g_i(E_{ikj}Y,v_{ik}).
	\end{equation}
\end{proposition}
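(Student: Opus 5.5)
The plan has two parts. First derive the loop expansion \eqref{eq:loop-curvature-calibration} from Lemma~\ref{lem:infinitesimal-holonomy}. Then insert it into \eqref{eq:discrete-ricci-contraction} and use the resolution of identity \eqref{eq:frame-resolution} to turn the weighted sum into a trace.

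\emph{Step 1: the loop expansion.} Apply Lemma~\ref{lem:infinitesimal-holonomy} to the smooth realization of $\mathcal C_{ikj}$, with $X=v_{ik}$ and $Y=v_{ij}$, where the ordering and orientation are those fixed by the selected loop. This gives
\[
\Logm(H_{\mathcal C_{ikj}})=-\varepsilon_{ikj}^2R_{p_i}(v_{ik},v_{ij})+O(\varepsilon_{ikj}^3).
\]
Divide by $-s_{\mathcal C_{ikj}}$. Since $s^{-1}=\varepsilon^{-2}\bigl(1+O(\varepsilon)\bigr)^{-1}=\varepsilon^{-2}\bigl(1+O(\varepsilon)\bigr)$, we get
\[
\Omega_{\mathcal C_{ikj}}=\bigl(1+O(\varepsilon)\bigr)\bigl(R_{p_i}(v_{ik},v_{ij})+O(\varepsilon)\bigr)=R_{p_i}(v_{ik},v_{ij})+O(\varepsilon).
\]
The error term is therefore $E_{ikj}:=\Omega_{\mathcal C_{ikj}}-R_{p_i}(v_{ik},v_{ij})=O(\varepsilon_{ikj})$. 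Here $R$ is bounded because $v_{ik}$ and $v_{ij}$ are fixed.

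\emph{Step 2: reduce to a trace.} Substituting Step 1 into \eqref{eq:discrete-ricci-contraction} and using linearity splits $\widehat{\operatorname{Ric}}_i(v_{ij},Y)$ into a main term
\[
\sum_k\rho_{ik}\,g_i\bigl(R_{p_i}(v_{ik},v_{ij})Y,\,v_{ik}\bigr)
\]
plus exactly the error sum in \eqref{eq:ricci-contraction-error}. So everything rests on identifying the main term with $\operatorname{Ric}_{p_i}(v_{ij},Y)$. Consider the linear map
\[
L:Z\mapsto R_{p_i}(Z,v_{ij})Y\in T_{p_i}M.
\]
The frame resolution \eqref{eq:frame-resolution} gives
\[
\operatorname{tr}L=\operatorname{tr}(L\circ\Id)=\sum_k\rho_{ik}\operatorname{tr}\bigl(L\circ(v_{ik}\otimes v_{ik}^\flat)\bigr)=\sum_k\rho_{ik}\,v_{ik}^\flat(Lv_{ik}),
\]
and the right-hand side is precisely the main term. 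With the convention $\operatorname{Ric}(X,Y)=\operatorname{tr}\bigl(Z\mapsto R(Z,X)Y\bigr)$ and $X=v_{ij}$, the main term is $\operatorname{Ric}_{p_i}(v_{ij},Y)$, which proves \eqref{eq:ricci-contraction-error}.

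\emph{Main obstacle: conventions.} The argument above is routine; the real difficulty is making three sign and ordering conventions consistent. First, Ricci must be taken as a trace over the first slot. Second, the loop $\mathcal C_{ikj}$ must realize the ordered pair $(v_{ik},v_{ij})$ with the orientation for which the lemma's sign holds. Third, $E_i$ is identified with $T_{p_i}M$, so the connection is the Levi-Civita connection and the contraction uses $g_i$. I would state these identifications explicitly at the outset. If one prefers the convention $\operatorname{tr}\bigl(Z\mapsto R(Z,Y)X\bigr)$, the symmetry of the Levi-Civita Ricci tensor makes the two conventions agree. A secondary point is that $O(\varepsilon)$ is meant for fixed directions, uniformly over the finitely many $k$; no uniformity beyond this is needed because the error is kept exactly in \eqref{eq:ricci-contraction-error}.
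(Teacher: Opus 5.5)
Your proposal is correct and follows the paper's route. The paper's proof likewise writes $\operatorname{Ric}_{p_i}(v_{ij},Y)$ as the trace of $Z\mapsto R_{p_i}(Z,v_{ij})Y$, evaluates that trace with the resolution of identity \eqref{eq:frame-resolution}, and then substitutes \eqref{eq:loop-curvature-calibration}; your Step~1 only spells out the derivation of \eqref{eq:loop-curvature-calibration} from Lemma~\ref{lem:infinitesimal-holonomy} and $s^{-1}=\varepsilon^{-2}(1+O(\varepsilon))$, which the paper treats as immediate.
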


\begin{proof}
	Let $A_Y:Z\mapsto R_{p_i}(Z,v_{ij})Y$.  By
	\eqref{eq:frame-resolution},
	\[
	\operatorname{tr}(A_Y)
	=
	\sum_{k\in\mathcal N_i^{\mathrm{tr}}}
	\rho_{ik}\,g_i\bigl(A_Yv_{ik},v_{ik}\bigr).
	\]
	Hence
	\[
	\operatorname{Ric}_{p_i}(v_{ij},Y)
	=
	\sum_{k\in\mathcal N_i^{\mathrm{tr}}}
	\rho_{ik}\,
	g_i\bigl(R_{p_i}(v_{ik},v_{ij})Y,v_{ik}\bigr).
	\]
	Substituting \eqref{eq:loop-curvature-calibration} yields
	\eqref{eq:ricci-contraction-error}.
\end{proof}

\begin{remark}
	The weights $\rho_{ik}$ in \eqref{eq:frame-resolution} are trace-quadrature
	weights and should not be conflated with the edge interaction weights
	$w_{ik}$ or the loop aggregation weights $\alpha_{\mathcal C}$.  The
	proposition gives a direction-resolved discrete proxy for the Ricci
	contraction.  By contrast, the operator $K_i$ in
	\eqref{eq:direct-aggregate} is a direction-compressed loop-curvature signal;
	the commutator construction in Definition~\ref{def:ricci-type} converts such
	a signal into a symmetric metric response.
\end{remark}

\subsection{An incidence-based covariant divergence}\label{sec:covariant-divergence}

The direct aggregate \eqref{eq:direct-aggregate} retains loop curvature at a
vertex but discards how individual loops are incident on the neighbouring
edges.  We now organize the same local observations through a covariant
edge-flux construction.  The geometric motivation is related to, but distinct
from, the Ricci contraction discussed above.  On a Riemannian manifold,
divergence is itself a trace operation:
\[
\operatorname{div}V=\operatorname{tr}_g(\nabla V)
\]
for a vector field $V$, whereas the Ricci tensor is the trace of the different
endomorphism $Z\mapsto R(Z,X)Y$.  Thus both operations contract local
geometric information, but they contract different objects.  The construction
below discretizes the former mechanism: oriented faces create
curvature-induced edge fluxes, which are then collected at a vertex by a
covariant divergence.  It is not presented as a second definition of the
Ricci contraction.

We now use the oriented face collection $\mathcal F$ fixed above; each
selected underlying triangular face occurs exactly once.  If
$\sigma\in\mathcal F$ is incident on a vertex $i$, let
$\Omega_{\sigma,i}\in\operatorname{End}(E_i)$ denote its normalized
logarithmic holonomy based at $i$, obtained by cyclically re-rooting the
oriented loop at $i$.  For example, if $\sigma_i=(i,j,k)$, so that
$\cC_{\sigma,i}=(i,j,k,i)$, then
\begin{equation}\label{eq:rooted-holonomy}
	\Omega_{\sigma,i}=-\frac{1}{s_\sigma}
	\Logm(F_{ik}F_{kj}F_{ji}),
	\qquad
	\Omega_{\sigma,j}=F_{ji}\Omega_{\sigma,i}F_{ij}.
\end{equation}
The second identity follows from $F_{ji}=F_{ij}^{-1}$ and the similarity
equivariance of the selected logarithm branch.  In particular, each
$\Omega_{\sigma,i}$ is $g_i$-skew-adjoint.

For an oriented edge $i\to j$ contained in $\sigma$, let
$\epsilon_{ij}(\sigma)=1$ when the boundary orientation of $\sigma$ traverses
$i\to j$, and let $\epsilon_{ij}(\sigma)=-1$ otherwise.  Thus
$\epsilon_{ji}(\sigma)=-\epsilon_{ij}(\sigma)$.  Write
$\mathcal F_{ij}$ for the faces containing the underlying edge $\{i,j\}$, and
define the curvature edge cochain based at $i$ by
\begin{equation}\label{eq:curvature-edge-cochain}
	A_{ij}:=\sum_{\sigma\in\mathcal F_{ij}}
	\epsilon_{ij}(\sigma)\Omega_{\sigma,i}
	\in\operatorname{End}(E_i).
\end{equation}
Although the subscript $ij$ records the graph direction $i\to j$, the
coefficient $A_{ij}$ acts on the source fibre $E_i$.  Acting on a vector
$s_i\in E_i$, the quantity $A_{ij}s_i$ is the corresponding curvature-induced
outgoing edge displacement.  The edge cochain $A$, rather than an evaluation
$A_{ij}s_i$ at an arbitrary section, is the object that has the required
covariant antisymmetry.

\begin{proposition}[Covariant flux identities]\label{prop:covariant-flux}
	The curvature edge cochain \eqref{eq:curvature-edge-cochain} satisfies
	\begin{equation}\label{eq:covariant-antisymmetry}
		A_{ji}=-F_{ji}A_{ij}F_{ij}.
	\end{equation}
	Let $w_{ij}=w_{ji}\geq0$ be edge conductances and let $\mu_i>0$ be vertex
	masses.  The covariant graph divergence of $A$ is the endomorphism
	\begin{equation}\label{eq:covariant-divergence}
		K_i^{\mathrm{div}}:=(\operatorname{div}_{w,\mu}^{F}A)_i
		:=\frac{1}{\mu_i}\sum_{j\sim i}w_{ij}A_{ij}
		\in\operatorname{End}(E_i).
	\end{equation}
	It is $g_i$-skew-adjoint and is equivariant under the local orthogonal gauge
	action \eqref{eq:gauge-action}.  More precisely,
	\begin{equation}\label{eq:divergence-gauge}
		(K_i^{\mathrm{div}})'=Q_iK_i^{\mathrm{div}}Q_i^{\mathsf T}.
	\end{equation}
	
	Moreover, for an endomorphism-valued vertex cochain
	$B=\{B_i\in\operatorname{End}(E_i)\}_{i\in V}$, define its covariant edge
	increment by
	\begin{equation}\label{eq:covariant-increment}
		(\mathrm d^F B)_{ij}:=F_{ij}B_jF_{ji}-B_i\in\operatorname{End}(E_i).
	\end{equation}
	Endow $\operatorname{End}(E_i)$ with the $g_i$-Hilbert--Schmidt pairing
	\begin{equation}\label{eq:g-hs-pairing}
		\langle S,T\rangle_{g_i}:=
		\operatorname{tr}\bigl(S^{\dagger_{g_i}}T\bigr),
		\qquad
		S^{\dagger_{g_i}}:=g_i^{-1}S^{\mathsf T}g_i.
	\end{equation}
	Then, for any choice of one orientation of each undirected edge,
	\begin{equation}\label{eq:covariant-green}
		\sum_{i\in V}\mu_i\langle B_i,(\operatorname{div}_{w,\mu}^{F}A)_i\rangle_{g_i}
		=-\sum_{\{i,j\}\in E}w_{ij}
		\langle(\mathrm d^F B)_{ij},A_{ij}\rangle_{g_i}.
	\end{equation}
\end{proposition}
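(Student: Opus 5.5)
The plan is to verify the four assertions in order: the covariant antisymmetry, skew-adjointness, gauge equivariance, and finally the Green identity, which is the only one needing real care.

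\emph{Covariant antisymmetry.} I would first check that the re-rooting relation $\Omega_{\sigma,j}=F_{ji}\Omega_{\sigma,i}F_{ij}$ in \eqref{eq:rooted-holonomy} holds for \emph{both} possible orientations of a face $\sigma\in\mathcal F_{ij}$, not only when $\sigma_i=(i,j,k)$.
\begin{itemize}
\item If $\sigma_i=(i,j,k)$, it is the stated identity.
\item If $\sigma_i=(i,k,j)$, then $H_{\sigma,i}=F_{ij}F_{jk}F_{ki}$ and $\sigma_j=(j,i,k)$. Hence
\[
H_{\sigma,j}=F_{jk}F_{ki}F_{ij}=F_{ji}H_{\sigma,i}F_{ij},
\]
using $F_{ji}=F_{ij}^{-1}$.
\end{itemize}
In both cases, similarity equivariance of the principal logarithm on the admissible branch gives the conjugation relation. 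The set $\mathcal F_{ij}=\mathcal F_{ji}$ is the same for both directions, and $\epsilon_{ji}(\sigma)=-\epsilon_{ij}(\sigma)$. Summing these relations yields \eqref{eq:covariant-antisymmetry}.

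\emph{Skew-adjointness and gauge equivariance.} $K_i^{\mathrm{div}}$ is a nonnegative combination of the $A_{ij}$. Each $A_{ij}$ is a signed sum of $g_i$-skew-adjoint endomorphisms, by \eqref{eq:gskew}. So $K_i^{\mathrm{div}}$ is $g_i$-skew-adjoint.

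For \eqref{eq:divergence-gauge}, the gauge action \eqref{eq:gauge-action} gives $\Omega_{\sigma,i}'=Q_i\Omega_{\sigma,i}Q_i^{\mathsf T}$, as already noted before the proposition. The signs $\epsilon_{ij}(\sigma)$, the conductances $w_{ij}$ and the masses $\mu_i$ are combinatorial data and do not change. Hence $A_{ij}'=Q_iA_{ij}Q_i^{\mathsf T}$, and summing gives \eqref{eq:divergence-gauge}.

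\emph{Green identity.} I would argue in three steps.
\begin{enumerate}
\item Expand the left side of \eqref{eq:covariant-green}. The masses cancel and it becomes $\sum_i\sum_{j\sim i}w_{ij}\langle B_i,A_{ij}\rangle_{g_i}$.
\item Regroup by undirected edges. Each $\{i,j\}$ contributes
\[
w_{ij}\bigl(\langle B_i,A_{ij}\rangle_{g_i}+\langle B_j,A_{ji}\rangle_{g_j}\bigr).
\]
\item Rewrite the second term as a pairing on $E_i$. For this I need an isometry lemma: conjugation $T\mapsto F_{ij}TF_{ji}$ is an isometry from $(\operatorname{End}(E_j),\langle\cdot,\cdot\rangle_{g_j})$ to $(\operatorname{End}(E_i),\langle\cdot,\cdot\rangle_{g_i})$. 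It follows from $F_{ij}^{\mathsf T}g_iF_{ij}=g_j$, which implies
\[
(F_{ij}TF_{ji})^{\dagger_{g_i}}=F_{ij}T^{\dagger_{g_j}}F_{ji},
\]
together with cyclicity of the trace.
\end{enumerate}
Combining the lemma with \eqref{eq:covariant-antisymmetry} gives
\[
\langle B_j,A_{ji}\rangle_{g_j}=-\langle F_{ij}B_jF_{ji},A_{ij}\rangle_{g_i}.
\]
The edge contribution therefore becomes $-w_{ij}\langle(\mathrm d^FB)_{ij},A_{ij}\rangle_{g_i}$, which is the right side of \eqref{eq:covariant-green}.

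\emph{Independence of the edge orientation.} Swapping $i$ and $j$ in the edge term gives the same value. This follows from the same isometry, since $(\mathrm d^FB)_{ji}=-F_{ji}(\mathrm d^FB)_{ij}F_{ij}$ and \eqref{eq:covariant-antisymmetry}. I would also note that the pairing \eqref{eq:g-hs-pairing} is real and symmetric: $\operatorname{tr}(S^{\dagger}T)=\operatorname{tr}((T^{\dagger}S)^{\dagger})=\operatorname{tr}(T^{\dagger}S)$, because $\dagger_{g_i}$ is similar to transposition and so preserves the trace.

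\emph{Main obstacle.} I expect the main obstacle to be bookkeeping rather than substance. The two points to get right are the re-rooting conjugation for the reversed face orientation, and the sign and direction conventions in the isometry step.
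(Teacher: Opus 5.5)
Your proposal is correct and follows essentially the same route as the paper: covariant antisymmetry from the re-rooting conjugation $\Omega_{\sigma,j}=F_{ji}\Omega_{\sigma,i}F_{ij}$ together with $\epsilon_{ji}(\sigma)=-\epsilon_{ij}(\sigma)$, and skew-adjointness and gauge equivariance checked summand by summand. The Green identity is obtained, as in the paper, by grouping over undirected edges and using that conjugation by $F_{ij}$ is an isometry between the $g_j$- and $g_i$-Hilbert--Schmidt pairings. Your explicit verification of the reversed-orientation case $\sigma_i=(i,k,j)$ and of the adjoint identity $(F_{ij}TF_{ji})^{\dagger_{g_i}}=F_{ij}T^{\dagger_{g_j}}F_{ji}$ fills in computations the paper only asserts.
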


\begin{proof}
	The rooted holonomies of a fixed oriented face are related by transport as in
	\eqref{eq:rooted-holonomy}.  Together with
	$\epsilon_{ji}(\sigma)=-\epsilon_{ij}(\sigma)$, this gives
	\[
	A_{ji}=\sum_{\sigma\in\mathcal F_{ij}}
	\epsilon_{ji}(\sigma)\Omega_{\sigma,j}
	=-F_{ji}\left(\sum_{\sigma\in\mathcal F_{ij}}
	\epsilon_{ij}(\sigma)\Omega_{\sigma,i}\right)F_{ij},
	\]
	which proves \eqref{eq:covariant-antisymmetry}.  Since each summand in
	\eqref{eq:curvature-edge-cochain} is $g_i$-skew-adjoint, the same is true of
	$A_{ij}$ and of the weighted sum \eqref{eq:covariant-divergence}.  Gauge
	equivariance follows because every factor in
	\eqref{eq:curvature-edge-cochain} based at $i$ transforms by conjugation with
	$Q_i$; the scalar weights $w_{ij}$ and $\mu_i$ are gauge invariant.
	
	Metric compatibility makes $F_{ji}:(E_i,g_i)\to(E_j,g_j)$ an isometry.
	Consequently, conjugation by $F_{ji}$ preserves the pairings
	\eqref{eq:g-hs-pairing}.  Using \eqref{eq:covariant-antisymmetry} and grouping
	the left-hand side of \eqref{eq:covariant-green} over undirected edges yields
	\[
	\begin{aligned}
		\sum_{i\in V}\mu_i\langle B_i,(\operatorname{div}_{w,\mu}^{F}A)_i\rangle_{g_i}
		&=\sum_{\{i,j\}\in E}w_{ij}
		\left(\langle B_i,A_{ij}\rangle_{g_i}
		+\langle B_j,A_{ji}\rangle_{g_j}\right) \\
		&=\sum_{\{i,j\}\in E}w_{ij}
		\left\langle B_i-F_{ij}B_jF_{ji},A_{ij}\right\rangle_{g_i},
	\end{aligned}
	\]
	which is \eqref{eq:covariant-green} by \eqref{eq:covariant-increment}.
\end{proof}

Proposition~\ref{prop:covariant-flux} is the discrete analogue of the
divergence mechanism used here: face curvature is first assigned to its
oriented boundary edges, and the signed outgoing fluxes are then summed at a
vertex.  Equation~\eqref{eq:covariant-green} identifies this operation as the
negative adjoint of a covariant edge increment, exactly as divergence is the
negative adjoint of a gradient after an integration-by-parts convention is
fixed.  The proposition does not identify $K_i^{\mathrm{div}}$ with a Ricci
tensor; it supplies a geometrically structured alternative to the direct local
aggregation \eqref{eq:direct-aggregate}.

\begin{remark}[Choice of scales, conductances, and masses]\label{rem:weights}
	The inverse loop scales $s_\sigma^{-1}$ determine the scale of individual
	curvature observations, whereas $w_{ij}$ determines how edge fluxes are
	aggregated at a
	vertex.  These roles should not be conflated.  The formal construction requires
	only symmetric nonnegative conductances and positive masses; one may take
	$\mu_i=1$ in the absence of a vertex-volume model.  When the node metrics are
	represented in a shared reference frame, a possible conductance prior is
	\[
	w_{ij}=\exp\bigl(-\beta\dist_{\mathrm{SPD}}(g_i,g_j)^2\bigr),
	\qquad \beta>0.
	\]
	For a locally gauge-equivariant model without such a shared representation,
	$w_{ij}$ should instead be supplied by gauge-invariant edge data or by a
	separately designed equivariant module.  Keeping these distinctions explicit
	prevents a coordinate-dependent prior from being mistaken for intrinsic graph
	geometry.
\end{remark}

\section{Ricci-type metric responses and evolution}\label{sec:ricci}

\subsection{Symmetric rectification of loop curvature}

The lower-index matrix $g_iK_i$ is antisymmetric by \eqref{eq:gskew}.  Hence
neither $g_iK_i$ nor its direct sum can be a velocity of a symmetric metric.
This is the obstruction that separates a loop-curvature observation from a
Ricci-type tensor.  The following elementary construction resolves the
symmetry mismatch.

\begin{definition}[Commutator Ricci-type response]\label{def:ricci-type}
	Let $B_{\cC,i}\in\Sym_n$ be a symmetric response matrix assigned to a loop
	$\cC\in\cC_i$.  Define
	\begin{equation}\label{eq:ricci-type}
		\RicD_g(i):=
		\sum_{\cC\in\cC_i}\alpha_{\cC}
		\bigl(B_{\cC,i}g_i\Omega_{\cC}
		-g_i\Omega_{\cC}B_{\cC,i}\bigr).
	\end{equation}
	If a single response matrix $B_i\in\Sym_n$ is used at node $i$, this becomes
	\begin{equation}\label{eq:ricci-type-node}
		\RicD_g(i)=[B_i,g_iK_i].
	\end{equation}
	The divergence-type version is
	\begin{equation}\label{eq:ricci-divergence}
		\RicDiv_g(i):=[B_i,g_iK_i^{\mathrm{div}}],
	\end{equation}
	where $K_i^{\mathrm{div}}$ is the covariant divergence of the curvature edge
	cochain defined in \eqref{eq:covariant-divergence}.
\end{definition}

Proposition~\ref{prop:ricci-contraction-calibration} explains the
geometric role of local loop aggregation. In the smooth calibration regime,
a normalized logarithmic holonomy records curvature sampled in two local
directions, while aggregation over the local loop family combines the
available directional observations. When the discrete directions satisfy a
frame-resolution condition, this mechanism approximates the Ricci
contraction. On a general abstract graph, however, such a condition is not
canonical; hence $K_i$ is regarded as a local curvature aggregate rather
than a discrete Ricci tensor.

\begin{proposition}[Symmetry and gauge equivariance]\label{prop:symmetry-gauge}
	For every $i$, the matrices $\RicD_g(i)$ and $\RicDiv_g(i)$ are symmetric.
	If the data are transformed by \eqref{eq:gauge-action} and the response
	matrices obey $B_{\cC,i}'=Q_iB_{\cC,i}Q_i^{\mathsf T}$, then
	\begin{equation}\label{eq:ricci-gauge}
		\RicD_{g'}(i)=Q_i\RicD_g(i)Q_i^{\mathsf T},
		\qquad
		\RicDiv_{g'}(i)=Q_i\RicDiv_g(i)Q_i^{\mathsf T}.
	\end{equation}
\end{proposition}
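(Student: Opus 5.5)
The argument has two parts. Symmetry follows from an elementary commutator identity. Gauge equivariance follows from tracking how each factor transforms under \eqref{eq:gauge-action}.

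\emph{Symmetry.} Let $B\in\Sym_n$ and let $S$ be skew-symmetric. Then
\[
[B,S]^{\mathsf T}=S^{\mathsf T}B-BS^{\mathsf T}=-SB+BS=[B,S],
\]
so the commutator of a symmetric and a skew-symmetric matrix is symmetric. By \eqref{eq:gskew}, each $g_i\Omega_{\cC}$ with $\cC\in\cC_i$ is skew-symmetric. Each summand in \eqref{eq:ricci-type} is therefore symmetric, and since the $\alpha_{\cC}$ are real scalars, so is $\RicD_g(i)$.

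For $\RicDiv_g(i)$, Proposition~\ref{prop:covariant-flux} shows that $K_i^{\mathrm{div}}$ is $g_i$-skew-adjoint. Hence $g_iK_i^{\mathrm{div}}$ is skew-symmetric, and the same identity gives symmetry. The single-matrix form \eqref{eq:ricci-type-node} is covered in the same way, because $g_iK_i$ is skew-symmetric.

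\emph{Equivariance.} The key point is that the lowered-index object $g_i\Omega_{\cC}$ transforms by conjugation. From $g_i'=Q_ig_iQ_i^{\mathsf T}$ and $\Omega_{\cC}'=Q_i\Omega_{\cC}Q_i^{\mathsf T}$ (stated after \eqref{eq:gauge-action}), together with $Q_i^{\mathsf T}Q_i=I$, we get
\[
g_i'\Omega_{\cC}'=Q_ig_iQ_i^{\mathsf T}Q_i\Omega_{\cC}Q_i^{\mathsf T}=Q_i\,(g_i\Omega_{\cC})\,Q_i^{\mathsf T}.
\]
Conjugation by $Q_i$ is an algebra automorphism, so it commutes with brackets:
\[
[Q_iBQ_i^{\mathsf T},\,Q_iSQ_i^{\mathsf T}]=Q_i[B,S]Q_i^{\mathsf T}.
\]
Using the hypothesis $B_{\cC,i}'=Q_iB_{\cC,i}Q_i^{\mathsf T}$, each summand in \eqref{eq:ricci-type} transforms by conjugation with $Q_i$. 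The weights $\alpha_{\cC}$ must be gauge-invariant; I take them to be scalar data unaffected by the gauge action, as is implicit in the setup. Summing gives the first identity in \eqref{eq:ricci-gauge}.

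For the divergence version, \eqref{eq:divergence-gauge} gives $(K_i^{\mathrm{div}})'=Q_iK_i^{\mathrm{div}}Q_i^{\mathsf T}$. The same computation then yields $g_i'(K_i^{\mathrm{div}})'=Q_i\,g_iK_i^{\mathrm{div}}\,Q_i^{\mathsf T}$, and the bracket identity finishes the proof. Here $B_i'=Q_iB_iQ_i^{\mathsf T}$ is the node-level analogue of the hypothesis.

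\emph{Main obstacle.} The only delicate point is that $\Omega_{\cC}'=Q_i\Omega_{\cC}Q_i^{\mathsf T}$ requires the principal logarithm to commute with orthogonal similarity. This holds because $H_{\cC}'=Q_iH_{\cC}Q_i^{\mathsf T}$ has the same spectrum as $H_{\cC}$, so admissibility is preserved. The principal logarithm is a primary matrix function, so $\Logm(QHQ^{-1})=Q\,\Logm(H)\,Q^{-1}$. The loop scale $s_{\cC}$ must also be gauge-invariant, which I assume as part of the modelling data.
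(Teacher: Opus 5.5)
Your proof is correct and follows essentially the same route as the paper: symmetry comes from the commutator of a symmetric matrix with the skew-symmetric matrix $g_i\Omega_{\cC}$ (or $g_iK_i^{\mathrm{div}}$), and equivariance comes from the fact that both $B$ and the lowered-index curvature transform by conjugation with $Q_i$, which passes through brackets and weighted sums. Your added remarks make explicit assumptions the paper leaves implicit: that $\alpha_{\cC}$ and $s_{\cC}$ must be gauge-invariant, that $B_i'=Q_iB_iQ_i^{\mathsf T}$ is needed for the divergence version, and that the principal logarithm commutes with similarity.
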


\begin{proof}
	For symmetric $B$ and $g_i$-skew-adjoint $\Omega$, the matrix $g_i\Omega$ is
	antisymmetric.  Consequently
	\[
	[B,g_i\Omega]^{\mathsf T}
	=(B g_i\Omega-g_i\Omega B)^{\mathsf T}
	=B g_i\Omega-g_i\Omega B.
	\]
	Summation proves symmetry.  Under \eqref{eq:gauge-action}, both $B$ and
	$g_i\Omega$ transform by $Q_i(\cdot)Q_i^{\mathsf T}$; the commutator and each
	weighted sum therefore transform as in \eqref{eq:ricci-gauge}. 
\end{proof}

The most basic nonlearnable choice is $B_{\cC,i}=g_i$.  It gives
\begin{equation}\label{eq:basic-response}
	\RicD_g(i)=\sum_{\cC\in\cC_i}\alpha_{\cC}
	[g_i,g_i\Omega_{\cC}].
\end{equation}
More flexibly, $B_{\cC,i}$ can be a learned symmetric matrix.  This parameter
does not create curvature: curvature remains encoded in loop holonomy.  Rather,
$B_{\cC,i}$ specifies how the curvature observation couples to the metric
degrees of freedom.  The formula can vanish for commuting $B_{\cC,i}$ and
$g_i\Omega_{\cC}$, which is an expected feature of a commutator-based response,
not an error to be hidden.

\subsection{SPD-preserving metric evolution}

Since the tangent space at an SPD matrix consists of symmetric matrices,
Proposition~\ref{prop:symmetry-gauge} makes the response eligible to drive a
metric update.  A Euclidean update $g_i^+=g_i-2\eta\RicD_g(i)$ may nevertheless
leave the SPD cone.  We therefore use the exponential retraction
\begin{equation}\label{eq:spd-update}
	g_i^{+}=g_i^{1/2}
	\exp\!\left(-2\eta\,g_i^{-1/2}\RicD_g(i)g_i^{-1/2}\right)g_i^{1/2},
	\qquad \eta>0.
\end{equation}
The matrix in the exponential is symmetric, so \eqref{eq:spd-update} is SPD
for every step size.  Replacing $\RicD_g$ by $\RicDiv_g$ gives the
divergence-type evolution.  This is a discrete Ricci-\emph{type} flow: it
shares the form of a curvature-driven metric evolution with
the smooth Ricci flow equation
\(\partial_t g_t=-2\operatorname{Ric}(g_t)\), but its generator depends on the
selected graph loops, their normalizations, and the response matrices.

\begin{algorithm}[H]
	\caption{Holonomy-based Ricci-type metric update at one layer}
	\label{alg:update}
	\begin{algorithmic}[1]
		\REQUIRE Graph $G$, metrics $\{g_i\}$, edge transports $\{F_{ij}\}$,
		oriented face set $\mathcal F$ (one orientation per selected underlying triangle) with loop collection $\{\cC_i\}$, symmetric response matrices $\{B_{\cC,i}\}$,
		and step size $\eta$.
		\ENSURE Updated SPD metrics $\{g_i^+\}$.
		\FOR{each oriented triangle $\cC=(i,j,k,i)$}
		\STATE Compute $H_{\cC}=F_{ik}F_{kj}F_{ji}$ and
		$\Omega_{\cC}=-s_{\cC}^{-1}\Logm(H_{\cC})$.
		\ENDFOR
		\FOR{each node $i$}
		\STATE Form $S_i\gets\sum_{\cC\in\cC_i}\alpha_{\cC}
		[B_{\cC,i},g_i\Omega_{\cC}]$.
		\STATE Set $g_i^+\gets g_i^{1/2}\exp(-2\eta
		g_i^{-1/2}S_ig_i^{-1/2})g_i^{1/2}$.
		\ENDFOR
		\RETURN $\{g_i^+\}$.
	\end{algorithmic}
\end{algorithm}

\section{A geometry-aware learning realization}\label{sec:learning}

The preceding sections define a family of geometric layers rather than a fully
specified neural architecture.  This section records one conservative way to
make the free choices learnable while retaining the structural constraints.
Let $h_i^{(\ell)}$ denote semantic node features and
$g_i^{(\ell)}\in\SPD_n$ metric features at layer $\ell$.  A model may learn
orthogonal factors $O_{ij}^{(\ell)}$ and set
\begin{equation}\label{eq:learn-transport}
	F_{ij}^{(\ell)}=(g_i^{(\ell)})^{-1/2}
	O_{ij}^{(\ell)}(g_j^{(\ell)})^{1/2},
	\qquad O_{ij}^{(\ell)}\in\Orth(n),
	\qquad O_{ji}^{(\ell)}=(O_{ij}^{(\ell)})^{\mathsf T},
\end{equation}
so compatibility is exact at every layer.  In practice, an orthogonal factor
can be produced by an exponential of a skew-symmetric matrix or by an
orthogonalization parameterization.

Likewise, a response matrix can be parameterized as
\begin{equation}\label{eq:learn-response}
	B_{\cC,i}^{(\ell)}=L_{\cC,i}^{(\ell)}
	(L_{\cC,i}^{(\ell)})^{\mathsf T}+\delta I,
	\qquad \delta>0,
\end{equation}
which guarantees both symmetry and positive definiteness.  The
curvature response is then inserted into \eqref{eq:spd-update}.  The semantic
features can be updated by an ordinary equivariant or invariant message-passing
operator that is conditioned on scalar loop statistics, for example norms of
$\Omega_{\cC}^{(\ell)}$ or gauge-invariant contractions constructed from them.

If a common registration of node metric coordinates is part of the data model,
one possible symmetric conductance prior and its associated row-normalized
attention are
\begin{equation}\label{eq:attention}
	\widetilde w_{ij}^{(\ell)}=
	\exp\!\left[-\beta_{\ell}\,
	\dist_{\mathrm{SPD}}\bigl(g_i^{(\ell)},g_j^{(\ell)}\bigr)^2\right],
	\qquad
	\pi_{ij}^{(\ell)}=
	\frac{\widetilde w_{ij}^{(\ell)}}
	{\sum_{k\sim i}\widetilde w_{ik}^{(\ell)}}.
\end{equation}
Here $\widetilde w_{ij}^{(\ell)}=\widetilde w_{ji}^{(\ell)}$ may be used as
the conductance $w_{ij}$ in the divergence construction, whereas
$\pi_{ij}^{(\ell)}$ is a directional attention weight for semantic
aggregation.  For unregistered fibres, Equation~\eqref{eq:attention} should not be used as a
gauge-invariant rule.  This is not merely a technical qualification: the
separation between a common-coordinate metric prior and local-frame gauge
symmetry determines what can legitimately be claimed about the model.

The full layer can therefore be read in two stages.  First, edge transports
produce loop holonomies and a symmetric metric response, which updates
$g_i^{(\ell)}$ within $\SPD_n$.  Second, node semantics $h_i^{(\ell)}$ are
propagated using a task-specific operator conditioned on the resulting
geometry.  This division respects the different roles of metric evolution and
semantic aggregation.  It also keeps a learning system from conflating an
arbitrary edge transformation with a metric-compatible parallel transport.

\section{Structural checks and special cases}\label{sec:checks}

This section records several elementary checks that determine how the proposed
objects should behave.  They are not convergence results, but they distinguish
the holonomy construction from an arbitrary matrix-valued message-passing rule.

\paragraph{\textbf{Flat-loop consistency}}\label{prop:flatness}
Suppose that $H_{\cC}=I$ for every selected loop based at $i$.  The principal logarithm of $I$ is zero.  Therefore every loop curvature term
$\Omega_{\cC}$ vanishes, and so do the sums in
\eqref{eq:direct-aggregate} and \eqref{eq:covariant-divergence}.  The
commutator definitions \eqref{eq:ricci-type} and \eqref{eq:ricci-divergence}
then vanish.  In \eqref{eq:spd-update}, the exponential has zero exponent. Thus
\[
K_i=0,\qquad K_i^{\mathrm{div}}=0,\qquad
\RicD_g(i)=0,\qquad \RicDiv_g(i)=0.
\]
Consequently, both updates defined by \eqref{eq:spd-update} leave $g_i$
unchanged.
\paragraph{\textbf{Existence of connection data}}
For arbitrary SPD metrics, metric-compatible transports always exist: choosing
$O_{ij}=I$ in \eqref{eq:transport-parameterization} supplies one.  This choice
is globally flat, however, because the transports telescope around every loop.
Nontrivial curvature is therefore carried by the incompatible edgewise
orthogonal factors $O_{ij}$, not by the node metrics alone.  This observation
has a useful modelling consequence.  An architecture that learns only
$\{g_i\}$ while fixes $O_{ij}=I$ cannot produce nonzero holonomy curvature;
it needs either learned connection degrees of freedom or a transport rule
derived from external geometric data.

At the opposite extreme, a graph with no selected cycles has
$\cC_i=\varnothing$ at every node.  Its direct and divergence-type curvature
responses are zero by definition.  This is mathematically consistent but
restrictive for tree-like graphs.  There are at least three principled
extensions: use a supplied two-dimensional cell complex; construct a local
clique or Vietoris--Rips completion whose added faces are part of the model;
or use longer fundamental cycles with an explicitly chosen scale
normalization.  None of these extensions is silently assumed in the present
triangular construction.

\paragraph{\textbf{Role of the response matrix}}
The response matrix $B$ must be distinguished from the transport connection.
The transport determines $H_{\cC}$ and hence the curvature observation, while
$B$ selects a symmetric channel through which that observation changes the
metric.  Its nonuniqueness is unavoidable in the current graph setting because
a graph alone does not provide a canonical analogue of the smooth tensor
contraction defining the Ricci tensor.  Two illustrative cases are worth
separating:
\begin{itemize}
	\item \emph{Fixed response.}  Taking $B_{\cC,i}=g_i$ yields the completely
	specified symmetric response in \eqref{eq:basic-response}.  This is a useful
	baseline when one wants the fewest learned geometric parameters.
	\item \emph{Learned response.}  Taking $B_{\cC,i}$ from
	\eqref{eq:learn-response} permits loop- and node-dependent coupling while
	preserving symmetry and gauge equivariance.  It enlarges the model class and
	should be evaluated against the fixed baseline rather than treated as a
	theorem about a unique curvature contraction.
\end{itemize}
In either case, the commutator makes clear that the metric response detects
noncommutativity between the curvature channel $g_i\Omega_{\cC}$ and the
selected symmetric channel $B_{\cC,i}$.  It is this spectral mismatch that
drives the proposed evolution.

\section{Geometric calibration and synthetic consistency checks}
\label{sec:synthetic}

This section tests the finite-dimensional mechanism of the construction on a
geometry with known curvature.  The tests have deliberately separated roles.
The first calibrates the passage from loop holonomy to curvature on the unit
sphere; the second verifies that the full
\[
	g_i,\ O_{ij}\longmapsto
	F_{ij}=g_i^{-1/2}O_{ij}g_j^{1/2}
	\longmapsto H_{\cC}\longmapsto\Omega_{\cC}
\]
representation preserves that curvature after comparison in a common
orthonormal frame; and the third gives a synthetic, non-oracle realization of
the learnable factors in \eqref{eq:learn-transport}.  No task labels are used
in these checks.  Their purpose is to make the geometric mechanism
falsifiable, rather than to substitute for a continuum theorem or an
application-level performance evaluation.

Throughout this section, the graph is an oriented triangular icosphere
approximating the unit sphere \(\mathbb S^2\).  The fibre at \(p_i\) is
\(T_{p_i}\mathbb S^2\), represented in a local orthonormal frame.  The
controlled geometry supplies the unsigned spherical area
\(A_{\cC}>0\) of every face, and we make the particular choice
\(s_{\cC}=A_{\cC}\).  This is a known-geometry specialization of the effective
scale in Definition~\ref{def:loop-curvature}; it is not a prescription for
the task-dependent normalization of a general graph.

\subsection{Holonomy--curvature calibration on the sphere}
\label{sec:sphere-calibration}

We first take \(g_i=I\) and let \(O_{ij}\) be the exact Levi--Civita parallel
transport along the shortest great-circle arc from \(p_j\) to \(p_i\).  Thus
\(F_{ij}=O_{ij}\).  For an outward-oriented spherical triangle, let
\(\theta_{\cC}\) denote the signed principal rotation angle of
\(H_{\cC}\), with the sign chosen consistently with the face orientation.  On
the unit sphere, the exact relation is
\[
	\theta_{\cC}=A_{\cC},
	\qquad
	\widehat K_{\cC}:=\frac{\theta_{\cC}}{A_{\cC}}=1.
\]
This scalar presentation is compatible with the matrix convention
\(\Omega_{\cC}=-A_{\cC}^{-1}\Logm(H_{\cC})\): the latter retains the oriented
endomorphism, whereas \(\widehat K_{\cC}\) records its scalar
holonomy--area calibration.

Figure~\ref{fig:sphere-holonomy} reports the calculation at four mesh levels,
from \(20\) to \(1280\) faces.  The largest observed
\(\lvert\theta_{\cC}-A_{\cC}\rvert\) is below \(4\times10^{-16}\), and the
estimated scalar curvature is \(1\) to numerical precision.  The deliberately
incorrect flat control \(F_{ij}=I\) has identically zero loop holonomy, so it
cannot recover the curvature of the sphere.  The refinement panel is included
only as a numerical stability check of the exact construction: since the
transports and areas are analytically supplied, it is not evidence for a
general convergence rate under mesh refinement.

\begin{figure}[H]
	\centering
	\includegraphics[width=\textwidth]{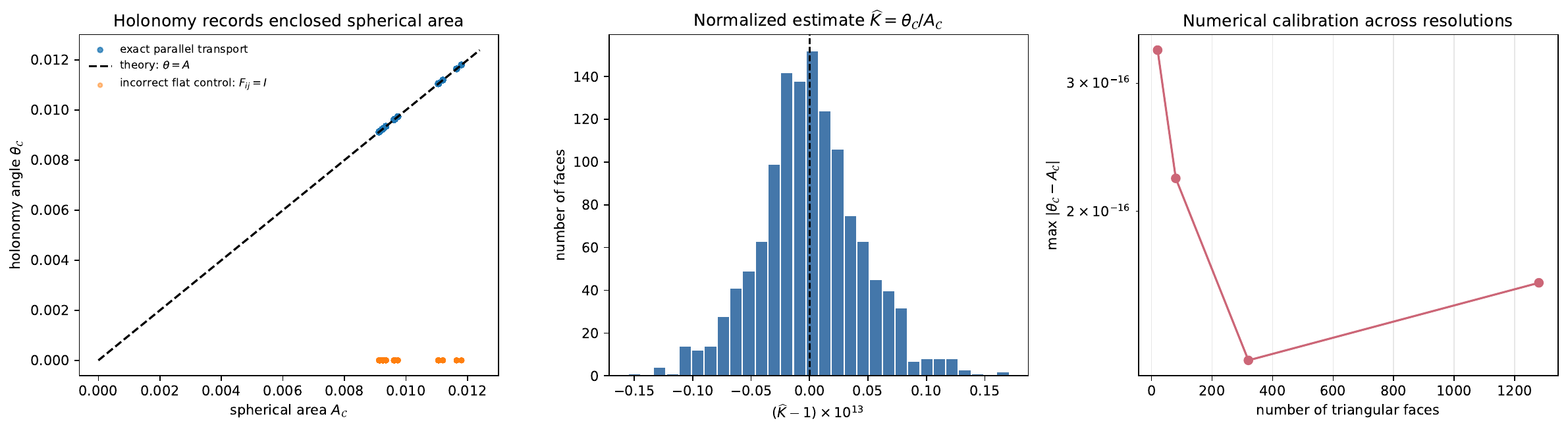}
	\caption{Holonomy--curvature calibration on the unit sphere.  Exact
	Levi--Civita edge transport gives the signed relation
	\(\theta_{\cC}=A_{\cC}\) for every oriented spherical face.  With the
	controlled scale \(s_{\cC}=A_{\cC}\), the scalar estimate
	\(\widehat K_{\cC}=\theta_{\cC}/A_{\cC}\) equals the unit-sphere curvature
	to numerical precision.  The flat control \(F_{ij}=I\) produces zero
	holonomy.}
	\label{fig:sphere-holonomy}
\end{figure}

\subsection{Metric representation calibration}
\label{sec:metric-representation-calibration}

The preceding test isolates the holonomy--curvature relation but uses the
trivial fibre metric.  We next use a smooth, spatially varying SPD field
\(\{g_i\}\) with condition numbers between \(1.003\) and \(2.600\) on the
finest mesh, while retaining the same exact sphere transports
\(\{O_{ij}\}\).  We then form \(F_{ij}\) by
\eqref{eq:transport-parameterization}.  This is the full metric-compatible
representation used in the paper, but it is not asserted to be the
Levi--Civita connection of the auxiliary field \(\{g_i\}\).

For a loop based at \(i\), the metric factors telescope exactly:
\[
	H_{\cC}^{F}
	=g_i^{-1/2}H_{\cC}^{O}g_i^{1/2},
	\qquad
	\widetilde\Omega_{\cC}^{F}
	:=g_i^{1/2}\Omega_{\cC}^{F}g_i^{-1/2}
	=\Omega_{\cC}^{O}.
\]
The first equality makes clear why a raw matrix comparison is not geometric:
\(\Omega_{\cC}^{F}\) and \(\Omega_{\cC}^{O}\) are written in different metric
representations.  On the finest mesh, the unwhitened difference
\(\|\Omega_{\cC}^{F}-\Omega_{\cC}^{O}\|_F\) ranges from
\(2.09\times10^{-3}\) to \(7.21\times10^{-1}\), whereas the whitened
difference
\(\|\widetilde\Omega_{\cC}^{F}-\Omega_{\cC}^{O}\|_F\) is at most
\(3.5\times10^{-13}\); see Figure~\ref{fig:metric-representation}.  The
compatibility, telescoping, and \(g_i\)-skew residuals are also at numerical
precision.  Thus the nontrivial local metrics do not introduce an additional
holonomy distortion when \(O_{ij}\) and \(s_{\cC}\) are fixed.

This conclusion is intentionally limited.  It does not say that \(g_i\) is
irrelevant in the full model: the local metric may still enter the loop scales,
weights, response matrices, and SPD evolution.  It only isolates the
representation identity that underlies the construction
\eqref{eq:transport-parameterization}.

\begin{figure}[H]
	\centering
	\includegraphics[width=0.96\textwidth]{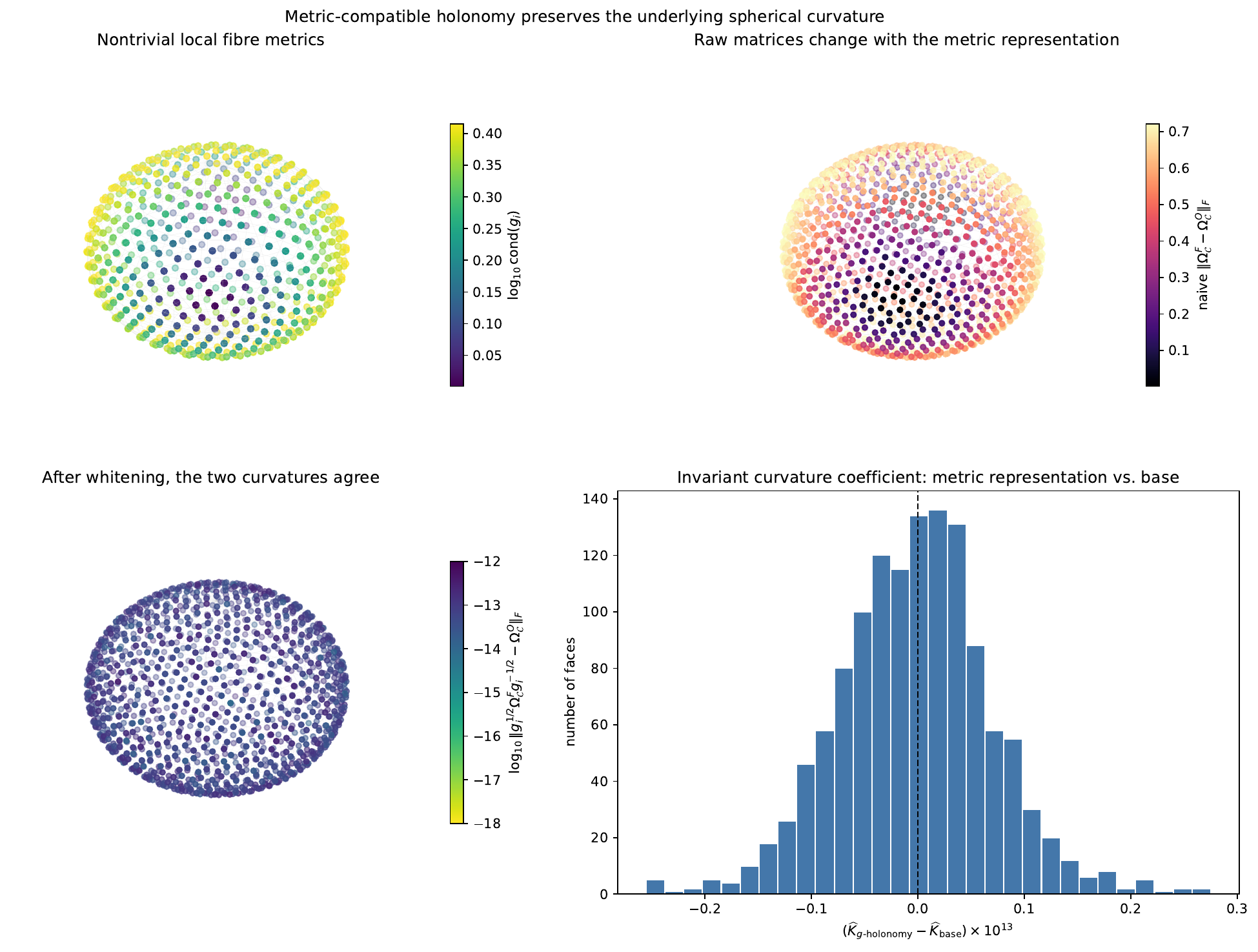}
	\caption{Metric-compatible representation calibration.  A nontrivial
	local SPD field changes the raw matrix representation of loop curvature,
	but the whitened curvature
	\(g_i^{1/2}\Omega_{\cC}^{F}g_i^{-1/2}\) agrees with the orthonormal-frame
	sphere curvature \(\Omega_{\cC}^{O}\) to numerical precision.}
	\label{fig:metric-representation}
\end{figure}

\subsection{Synthetic recovery of learned edge transports}
\label{sec:learned-transport-calibration}

The previous two checks use the exact sphere transport and hence do not test
the learnable degree of freedom in Section~\ref{sec:learning}.  We therefore
construct a small synthetic recovery problem on a fixed level-two icosphere
(\(162\) vertices, \(480\) undirected edges, and \(320\) oriented faces).
For each directed edge \(i\to j\), the estimator observes \(M\) noisy local
vector correspondences
\[
	x_{ij}^{(m)}\in\mathbb R^2,
	\qquad
	y_{ji}^{(m)}=O_{ji}^{\star}x_{ij}^{(m)}
	+\eta_{ij}^{(m)},\qquad m=1,\ldots,M,
\]
where \(O_{ji}^{\star}\) is the hidden exact sphere transport and the noise
standard deviation is \(0.15\).  The hidden transport is used only to
generate the synthetic observations and to evaluate the result.  It is not an
input to the estimator.

For each edge, we use the orthogonal Procrustes rule
\[
	\widehat O_{ji}
	=\underset{O\in\mathrm{SO}(2)}{\arg\min}\,
	\sum_{m=1}^{M}\|Ox_{ij}^{(m)}-y_{ji}^{(m)}\|_2^2,
	\qquad
	\widehat O_{ij}=\widehat O_{ji}^{\mathsf T},
\]
followed by the metric-compatible parameterization
\(\widehat F_{ij}=g_i^{-1/2}\widehat O_{ij}g_j^{1/2}\).  This is one
explicit, CPU-scale realization of the orthogonal-factor learning option in
\eqref{eq:learn-transport}; it is not claimed to be the unique learning
architecture of the paper.  We use the same nontrivial metric field as above,
set \(s_{\cC}=A_{\cC}\), and average over eight independent noise
realizations.

\begin{table}[H]
	\centering
	\caption{Synthetic learned-transport calibration on the fixed sphere mesh.
	Each entry is the mean over eight noise realizations.  The curvature error
	is measured after whitening at the loop base point:
	\(\|\widetilde\Omega_{\cC}^{F}-\Omega_{\cC}^{\star}\|_F\).}
	\label{tab:learned-transport}
	\small
	\begin{tabular}{@{}rrrrrr@{}}
		\toprule
		Correspondences per edge \(M\) & \(1\) & \(4\) & \(16\) & \(64\) & \(256\) \\
		\midrule
		Mean transport error
		\(\|\widehat O-O^\star\|_F\)
		& \(2.094{\times}10^{-1}\) & \(6.683{\times}10^{-2}\)
		& \(3.123{\times}10^{-2}\) & \(1.524{\times}10^{-2}\)
		& \(7.488{\times}10^{-3}\) \\
		Mean whitened curvature error
		& \(1.125{\times}10^{1}\) & \(3.024\)
		& \(1.404\) & \(6.883{\times}10^{-1}\)
		& \(3.437{\times}10^{-1}\) \\
		\bottomrule
	\end{tabular}
\end{table}

\begin{figure}[H]
	\centering
	\includegraphics[width=\textwidth]{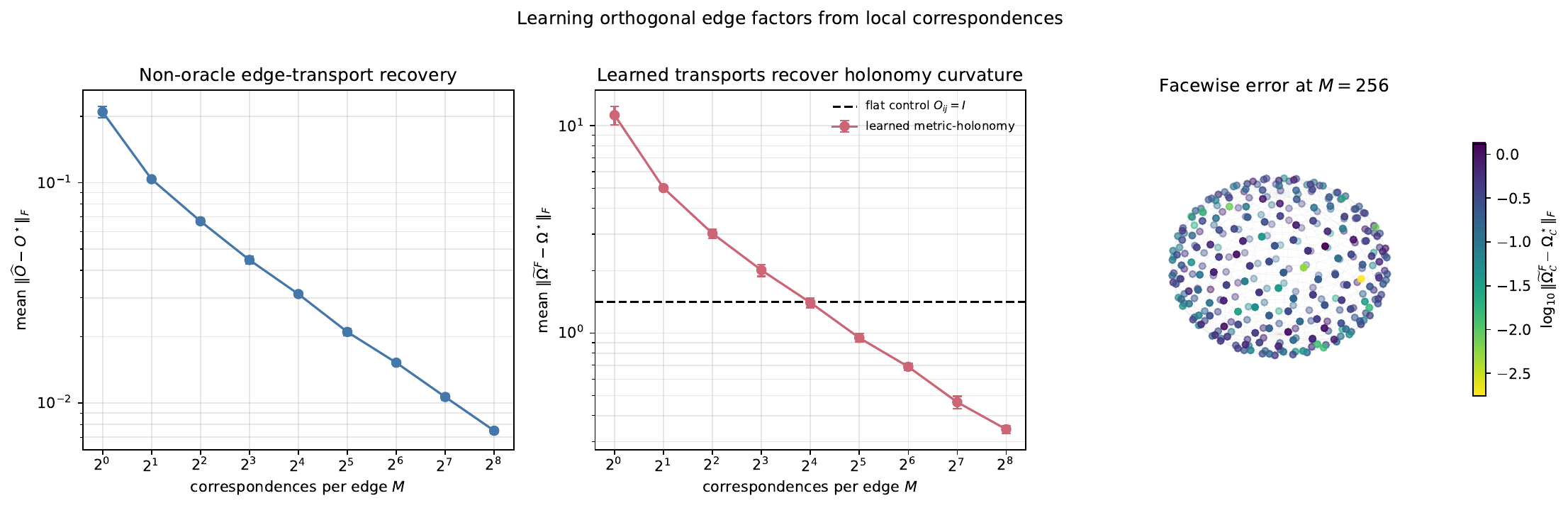}
	\caption{Synthetic recovery of non-oracle edge transports.  Per-edge
		\(\mathrm{SO}(2)\) Procrustes estimates use only noisy vector
		correspondences.  As the number \(M\) of correspondences increases, both
		the learned transport error and the induced whitened holonomy-curvature
		error decrease.  The dashed line is the flat connection control
		\(O_{ij}=I\).}
	\label{fig:learned-transport}
\end{figure}

Table~\ref{tab:learned-transport} and
Figure~\ref{fig:learned-transport} show that, under this observation model,
both the edge transport error and the induced whitened curvature error decline
as \(M\) increases.  At \(M=256\), the mean curvature error is
\(3.44\times10^{-1}\), compared with \(1.41\) for the flat control
\(O_{ij}=I\).  The metric-representation residual remains at numerical
precision throughout, so the observed error is attributable to imperfect
transport recovery rather than to the metric-compatible parameterization.

This is an empirical fixed-mesh consistency trend as the number of
correspondences increases.  It does not establish a limit as the mesh size
tends to zero, nor does it show that an analogous training signal is available
in an arbitrary application.  In particular, a mesh-refinement theorem would
require explicit control of the learned edge error relative to the face scale.

\paragraph{\textbf{Scope of the synthetic checks.}}
The exact algebraic properties proved in the preceding sections---metric
compatibility, reversibility, telescoping of the loop holonomy, and local
orthogonal gauge equivariance---are not inferred from the numerical plots.
They hold by construction under their stated assumptions.  The present checks
instead verify an implementation on a geometry with known curvature and
exhibit one concrete route by which observational information can determine
the otherwise free orthogonal factors.  A subsequent task-level experiment
must separately test whether the resulting geometric information is useful in
data analysis.

\section{Discussion, scope, and next steps}\label{sec:discussion}

The construction proposed here turns a basic fact of connection geometry into
a graph modelling device: nontrivial loop holonomy supplies a local curvature
observation.  Metric compatibility locates this observation in a
$g_i$-orthogonal Lie algebra, and the commutator response converts its
lowered-index antisymmetry into a symmetric metric direction.  This produces a
well-defined SPD-preserving update and makes the role of every additional
choice visible.

Several limitations delimit the current result.  First, triangles are used as
minimal loops, but many graphs have few or no triangles.  Longer cycles or a
chosen cell-complex completion would be needed in sparse settings; their scale
normalization cannot be inferred from the present construction alone.  Second,
the logarithm requires a branch choice and can become numerically unstable near
the excluded spectrum.  Third, the response matrix $B$ is a modelling choice,
not a replacement for the smooth Ricci contraction.  Finally, the calibrations
in Section~\ref{sec:synthetic} do not provide a general continuum limit, a
stability theorem for learned dynamics, or a predictive advantage on a learning
task.  These questions require separate hypotheses and validation.

There are several concrete directions for completing the theory and evaluating
the model.  A cell-complex version could provide an explicit discrete area and
orientation calculus.  A typed tensor formulation may extend the present
orthogonal gauge covariance to broader changes of local frame.  On the learning
side, one should compare fixed versus learned response matrices, test the
effect of loop scales, and evaluate whether curvature-conditioned diffusion
improves task performance without destabilizing metric evolution.  These are
natural continuations of the graph-geometric program, but they are not
assumed in the statements of this paper.

\section*{Acknowledgements}

\end{document}